\newcommand{\red}{\color[rgb]{1,0,0}}
\newcommand{\eps}{\varepsilon}
\newcommand{\orient}{\mathrm{orient}}
\newcommand{\cF}{\mathcal F}
\newcommand{\cC}{\mathcal C}
\newcommand{\cE}{\mathcal E}
\newcommand{\cO}{\mathcal O}
\newcommand{\grad}{\nabla}
\newcommand{\spann}{\operatorname{span}}
\newcommand{\IW}{\mathbb W}
\newcommand{\cW}{\mathcal W}
\newcommand{\sign}{\mathrm{sign}}
\newcommand{\vecn}{\mathfrak n}
\newcommand{\vecm}{\mathfrak m}
\newcommand{\II}{\mathbb I}
\newcommand{\1}{1\hspace{-0.098cm}\mathrm{l}}
\newcommand{\N}{{\mathbb N}}
\newcommand{\R}{{\mathbb R}}
\newcommand{\cN}{{\mathcal N}}
\newcommand{\Hess}{\text{Hess}\,}
\newcommand{\cP}{{\mathcal P}}
\theoremstyle{plain}
\newtheorem{theorem}{Theorem}[section]
\newtheorem{lemma}[theorem]{Lemma}
\newtheorem{defi}[theorem]{Definition}
\theoremstyle{definition}
\newtheorem{rem}[theorem]{Remark}
\newtheorem{exa}[theorem]{Example}
\begin{document}

\title[On minimal representations of shallow ReLU networks]%
{On minimal representations of shallow ReLU networks}

\author[]
{Steffen Dereich}
\address{Steffen Dereich\\
	Institut f\"ur Mathematische Stochastik\\
	Fachbereich 10: Mathematik und Informatik\\
	Westf\"alische Wilhelms-Universit\"at M\"unster\\
	Orl\'eans-Ring 10\\
	48149 M\"unster\\
	Germany}
\email{steffen.dereich@wwu.de}

\author[]
{Sebastian Kassing}
\address{Sebastian Kassing\\
	Institut f\"ur Mathematische Stochastik\\
	Fachbereich 10: Mathematik und Informatik\\
	Westf\"alische Wilhelms-Universit\"at M\"unster\\
	Orl\'eans-Ring 10\\
	48149 M\"unster\\
	Germany}
\email{sebastian.kassing@wwu.de}

\keywords{Neural networks, shallow networks, minimal representations, ReLU activation}
\subjclass[2020]{Primary 68T05; Secondary 68T07, 26B40}

\begin{abstract}
	The realization function of a shallow ReLU network is a continuous and piecewise affine function $f:\R^d\to \R$, where the domain $\R^{d}$ is partitioned by a set of $n$ hyperplanes into cells on which $f$ is affine. We show that the minimal representation for $f$ uses either $n$, $n+1$ or $n+2$ neurons and we characterize each of the three cases. In the particular case, where the input layer is one-dimensional, minimal representations always use at most $n+1$ neurons but in all higher dimensional settings there are functions for which  $n+2$ neurons are needed. Then we show that the set of minimal networks representing $f$ forms a $C^\infty$-submanifold $M$ and we derive the dimension and the number of connected components of $M$. Additionally, we give a criterion for the hyperplanes that guarantees that all continuous, piecewise affine functions are  realization functions of appropriate ReLU networks.
\end{abstract}

\maketitle
\section{Introduction}
Standard convergence results for Stochastic Gradient Descent schemes often use strong convexity of the loss function around an isolated minimum. However, a common loss function in neural network applications depends exclusively on the realization function generated by the weights and biases and there exists a non-trivial set of parametrizations generating the same realization function. In practice, the most frequently used activation function is the ReLU function $\sigma(x)=\max(x,0)$. The positive homogeneity of ReLU leads to many redundancies in the loss landscape (\cite{petzka2020notes}). As is well known (see for instance \cite{qin2020reducing}) one can restrict attention to networks where the vector of  weights associated to each hidden neuron have uniform norm without reducing the set of representable functions.  
Additionally, \cite{cooper2018loss} showed that in the overparameterized regime (more parameters in the network architecture than data points to fit), under the assumption that the network is able to fit the data points and the activation function~$\sigma$ is smooth, the set of parameters $P \subset \R^n$ achieving zero training loss forms a $(n-p)$-dimensional smooth submanifold for almost all input data $S=((x_i,y_i))_{1=1, \dots, p}\subset \R^{(d+1) \times p}$ with fixed $p \in \N$. Moreover, for every point $x$ in the manifold the Hessian of the loss function $\Hess F(x)$ has $p$ positive eigenvalues and $n-p$ eigenvalues equal to $0$.
\cite{FGJ2019} and \cite{Der19Central} established convergence rates for SGD and \cite{moitra2020fast} for the Langevin diffusion under these assumptions for the critical locus. See also \cite{wojtowytsch2021stochastic} for results on SGD in continuous time in the case where the perturbation scales with the value of the loss function.

Nonetheless, general networks do not meet this assumption. For an easy counterexample, we try to approximate the zero function $f: \R \to \R; x \mapsto 0$ with a neural network using one hidden-layer with one neuron.
Such a network $\IW$ with weights $w_{1,1}^1,w_{1,1}^2$ and biases $b_1^1,b_1^2$ responds to an input $x$ via 
$$
		\mathfrak N^\IW(x)=w_{1,1}^2 \max(w_{1,1}^1x+b_1^1,0)+b_1^2.
$$
Now, the equation $\mathfrak N^\IW(x) \equiv 0$ is solved by the set 
$$
\bigl\{(w_{1,1}^1,w_{1,1}^2,b_1^1,b_1^2) \in \R^4: \bigl( w_{1,1}^1 = 0 \vee w_{1,1}^2=0\bigr) , b_1^2=-w_{1,1}^2 b_1^1\bigr\}
$$
which around $(0,0,0,0)$ forms not even locally a submanifold of $\R^4$. 

One key to solving this problem is minimality. We classify the set of parametrizations leading to the same realization function in shallow ReLU networks under the assumption that the network is minimal, meaning that there exists no neural network with less neurons in the hidden-layer which admits the same realization function, and show that the set of minimal parametrizations forms a smooth submanifold. We distinguish between three different scenarios and derive the dimension and the number of connected components in each of the situations.

The idea of minimal networks has been used by \cite{sussmann1992uniqueness} to describe the set of configurations with the same structure and realization function for real-valued neural networks with one hidden-layer and activation function $\sigma=\tanh$. Therein, the point-symmetry of the activation function is used to state that the set of minimal networks with the same input-output map is the union of discrete points generated by a finite group of transformations. This result has been generalized in \cite{Nitta2003Uniqueness} for complex-valued neural networks without bias and in \cite{Kobayashi2010Excpetional} for general complex-valued networks with hyperbolic tangent activation. In the present work the transformation group is not countable, due to the positive-homogeneity of the ReLU function, leading to a rich set of configurations. 

In a similar but independent approach, \cite{cheridito2021landscape} characterized the critical points of the loss function in the task of approaching affine functions on an interval with respect to the $L^2$-norm. However, we consider all functions that are representable by neural networks, i.e. piecewise affine functions admitting some additional properties, and allow the input dimension to be arbitrarily large.

The second main result of this article concerns the class of affine functions that are representable by shallow ReLU nets. We show that all continuous and piecewise affine functions can be constructed by a neural network if the hyperplanes which partition the space into cells on which the function acts affine (the \emph{breaklines}) do not intersect in an artificial way (see further Theorem~\ref{lem:dim}). Moreover, we show that for a fixed number of hidden neurons, the set of weights and biases that lead to those piecewise affine realizations not covered by Theorem~\ref{lem:dim} are a nullset with respect to the Lebesgue-measure. Therefore, the set of piecewise affine functions satisfying the assumptions of Theorem~\ref{lem:dim} have universal approximation properties (\cite{LESHNO1993861}, \cite{HORNIK19931069}).

There are a lot of results concerning the approximation power of shallow nets (e.g. \cite{cybenko1989approximation}, \cite{hornik1989multilayer}, \cite{LESHNO1993861} and \cite{HORNIK19931069}). \cite{Hanin2019Universal} shows that neural nets with arbitrary depths are able to approximate every continuous function $f:[0,1]^{d} \to \R$, if there are at least $d+2$ neurons on each hidden layer. Further, \cite{arora2018understanding} showed that every piecewise linear function $f:\R^{d} \to \R$ can be represented by a ReLU neural network with at most $\lceil \log_2(n+1) \rceil+1$ depth and for every piecewise linear, univariate function $f:\R \to \R$ with $n$ breakpoints there exists a net having one hidden-layer with $n+1$ neurons that represents $f$. We give sufficient conditions on the piecewise affine function in order to be representable by a shallow neural net and generalize their second result for multivariate functions $f:\R^{d} \to \R$. We show that if $f$ has $n$ breaklines and is representable by a shallow neural net there exists a net with $n+2$ neurons on the hidden layer that represents $f$. Further, we give algebraic conditions on $f$ that are equivalent to the fact that $f$ is representable by a neural net with $n$ hidden neurons (resp. $n+1$ hidden neurons) and characterize the manifold of minimal representations for $f$ depending on the minimal number of neurons.

\section{Shallow ReLU networks}

In this section we give the basic definitions and notions of a ReLU network with one hidden layer. We denote by $d_0 \in \N$ the input dimension, by $d_1\in\N$ the numbers of neurons in the hidden layer and by  $(\cdot)_+:\R\to \R$ the ReLU activation function, i.e. $(x)_+=\max(x,0)$, for all $x \in \R$. For simplicity we let the output dimension $d_2=1$.
The configuration of such a neural network is described by a tuple
$$
\IW=(W^1,b^1, W^2, b^2) \in \R^{d_{1}\times d_{0}}\times \R^{d_1}\times \R^{1\times d_{1}}\times \R  
=:\cW^{d_0,d_1,1},
$$
where, for $i=1,2$, we always write 
$$
W^i=(w_{j,k}^i)_{\substack{j=1, \dots, d_i \\ k=1, \dots, d_{i-1}}} \quad  \text{ and } \quad   b^i=(b_1^i, \dots, b_{d_i}^i)^\dagger,
$$
where $^\dagger$ denotes the transpose of a matrix or a vector.
Moreover, for $j=1, \dots,d_1$, we write $w_j^1=(w_{j,1}^1, \dots, w_{j,d_0}^1)^\dagger$ and $w_j^2=w_{1,j}^2$.

We often refer to a configuration of a neural network as the (\emph{neural}) \emph{network}~$\IW$. A configuration $\IW \in \cW^{d_0, d_1, 1}$ describes a function $\mathfrak N^\IW: \R^{d_0} \to \R$ via
\begin{align} \label{eq:response}
	\mathfrak N^\IW(x)= b^2 + \sum_{j=1}^{d_1} w_{j}^2 \max\Bigl(\sum_{i=1}^{d_0} w_{j,i}^1x_i+b_j^1, 0\Bigr).
\end{align}
We call $\mathfrak N^\IW$ \emph{realization function} or \emph{response}. Further, we call $f \in \mathcal F = \bigcup_{d_1\in\N}\{\mathfrak N^\IW : \IW \in \cW^{d_0,d_1,1}\}$ a \emph{representable function}. 

The above network representation has a lot of redundancies. Moreover, there are more intuitive geometric descriptions of the response function. To understand this we first introduce  an orientation on $\R^{d_0}$. The explicit choice will have no relevance.

We fix a convex cone $\cO_+\subset \R^{d_0}\backslash\{0\}$ such that for $\cO_-=-\cO_+$ one has $\R^{d_0}\backslash \{0\}=\cO_+\dot \cup \,\cO_-$ and we call the vectors in $\cO_+$ positively oriented  and the vectors in $\cO_-$ negatively oriented. $\mathrm{orient}:\R^{d_0}\backslash\{0\}\to\{\pm1\}$ denotes the mapping that assigns $x$ its orientation $\mathrm{orient}(x)$.

We call a network $\IW=\cW^{d_0,d_1,1}$ \emph{non-degenerate} iff for all $j=1,\dots,d_1$, $w_{j}^2w_j^1\not=0$. Now we will give a different description of the response of a non-degenerate network $\IW$:
 we say that the neuron $j\in\{1,\dots, d_1\}$ has
\begin{itemize} 
\item positive normal  ${\displaystyle \vecn_j=\frac {\orient(w_j^1)  }{|w_j^1|} w_j^1\in \cO_+^1:=\{x\in\cO_+: |x|=1\}}$,
\item offset $o_j=-\frac {\orient(w_j^1)}{|w_j^1|} b_j^1\in\R$,
\item kink $\Delta_j=|w_j^1| w_j^2\in\R\backslash\{0\}$ and
\item orientation $\sigma_j=\mathrm{orient}(w_j^1)\in\{\pm1\}$.
\end{itemize}
For $d_1 \in \N$ we call $(\vecn, o, \Delta, \sigma, b^2)$ with $\vecn = (\vecn_1, \dots, \vecn_{d_1})$, etc. the \emph{effective tuple for the neurons $1, \dots d_1$} and write $\cE_{d_1}$ for the set of all effective tuples using $d_1$ neurons.

 First we note that the response of a network can be represented in terms of the previous quantities:
\begin{align*}
\mathfrak N^\IW(x)&=b^2+\sum_{j=1}^{d_1} w^2_j(w^1_j \cdot x+b_j^1)_+=b^2+\sum_{j=1}^{d_1}\Delta_j \Bigl(\frac 1{|w_j^1|} w^1_j \cdot x+\frac 1{|w_j^1|} b_j^1\Bigr)_+\\
&=b^2+\sum_{j=1}^{d_1} \Delta_j \bigl(\sigma _j ( \vecn_j \cdot x- o_j)\bigr)_+,
\end{align*}
where $\cdot $ denotes the scalar product.  
Conversely, for arbitrary  positive normals $\vecn_1,\dots,\vecn_{d_1}\in \cO_+^1$, offsets $o_1,\dots,o_{d_1}\in\R$, kinks $\Delta_1,\dots,\Delta_{d_1}\in \R\backslash \{0\}$, orientations $\sigma_1,\dots,\sigma_{d_1}\in \{\pm1\}$ and $b^2\in\R$ the following mapping is a $C^\infty$-bijection onto all networks $\IW$ having the latter features:
\begin{align} \label{eq:manifold}
	\psi: (0, \infty)^{d_1} \to \mathcal W^{d_0,d_1,1}\; ; \; a\mapsto (\tilde W^1, \tilde b^1, \tilde W^2,b^2),
\end{align}
where 
$$
	\tilde W^1 = (\sigma_1 a_1 \vecn_1, \dots, \sigma_{d_1} a_{d_1} \vecn_{d_1})^\dagger \; , \; \tilde b^1=(-\sigma_1 a_1 o_1, \dots, -\sigma_{d_1} a_{d_1} o_{d_1})
$$
and
$$
	\tilde W^2 = \Bigl( \frac{\Delta_1}{a_1}, \dots, \frac{\Delta_{d_1}}{a_{d_1}}\Bigr).
$$
Thus, for $d_1 \in \N$ and an effective tuple $(\vecn, o, \Delta, \sigma, b^2)$ the set of neural networks $\IW \in \cW^{d_0,d_1,1}$ having the effective tuple $(\vecn, o, \Delta, \sigma, b^2)$ forms a $d_1$-dimensional $C^\infty$-manifold with one connected component.

With slight misuse of notation we write
$$
\mathfrak{N}^{\vecn, o, \Delta,\sigma,b^2}:\R^{d_0}\to \R,\, x\mapsto b^2+\sum_{j=1}^{d_1} \Delta_j \bigl(\sigma _j ( \vecn_j \cdot x- o_j)\bigr)_+
$$
and briefly write $\mathfrak{N}^{\vecn,o,\Delta,\sigma}=\mathfrak{N}^{\vecn,o,\Delta,\sigma,0}$ and $\mathfrak{N}^{\vecn,o,\Delta}=\mathfrak{N}^{\vecn,o,\Delta,(1,\dots,1)}$.  Although, the tuple $(\vecn,\Delta,\sigma,o,b^2)$ does not  uniquely describe a neural network, it describes a response function uniquely and thus we will speak of the neural network with \emph{effective} tuple  $(\vecn,o,\Delta,\sigma,b^2)$.
We note that each summand of  $\mathfrak{N}^{\vecn,o,\Delta,\sigma,b^2}$ has the breakline
$$
P_j=\bigl\{x\in\R^{d_0}:\vecn_j \cdot x =  o_j\bigr\}
$$
and we call 
$$
 A_j=\{x: \sigma_j(\vecn_j \cdot x-o_j)>0\}
$$
the domain of activity of the $j$th neuron. With slight misuse of notation we also call $(\vecn_j,o_j)$ \emph{breakline} of the $j$th neuron. Note that in the case where $\sigma_j=1$, $A_j$ is the part on the positive side of  the hyperplane $P_j$ and in the case where $\sigma_j=-1$, $A_j$ is the negative side of the hyperplane.
By construction, we have 
$$
\mathfrak{N}^{\vecn,o,\Delta,\sigma,b^2}(x)= b^2+\sum_{j=1}^{d_1} \1_{A_j}(x)  \bigl(\sigma _j \Delta_j( \vecn_j \cdot x- o_j)\bigr).
$$
Outside the breaklines the function $\mathfrak{N}^{\vecn,o,\Delta,\sigma,b^2}$ is differentiable with
$$
D\mathfrak{N}^{\vecn,o,\Delta,\sigma,b^2}(x)= \sum_{j=1}^{d_1} \1_{A_j}(x)  \sigma _j \Delta_j  \vecn_j.
$$
Note that for each summand  along a breakline the difference of the differential on the positive  and negative side equals $\Delta_j \vecn_j$.

\section{Minimal representations of shallow ReLU nets}
In this section, we fix a representable function $f \in \cF$ and derive the necessary number of neurons in the hidden layer in order to find a neural network having response $f$. Moreover, we classify the set of minimal networks with realization function $f$.
We start with the following auxiliary result on representable functions.

\begin{lemma}\label{le:std_rep}
	Let $f \in \mathcal F = \bigcup_{d_1\in\N}\{\mathfrak N^\IW : \IW \in \cW^{d_0,d_1,1}\}$. 
	\begin{enumerate} \item[(i)]  The function $f$ is affine or there exist  (up to reordering) unique pairwise different breaklines $S_1=(\vecn_1,o_1),\dots,S_n=(\vecn_n,o_n)$ $(n\in\N)$ and kinks $\Delta_1,\dots,\Delta_n\in\R\backslash \{0\}$ in the sense that there exist unique $a\in\R^{d_0}$ and $b\in\R$ with 
	$$
	f(x)=\mathfrak N^{(\vecn,o,\Delta)}(x) + a\cdot x+b,\text{ \ for all }x\in\R^{d_0}.
	$$
	\item[(ii)]  If $f$ is non-affine,  then for every $(\sigma_j)\in \{\pm1\}^n$, one has
	\begin{align}\label{rep1}
	f(x)=\mathfrak N^{(\vecn,o,\Delta,\sigma)}(x) + a_\sigma \cdot x+b_\sigma,
	\end{align}
	where
	$$
	a_\sigma =  a+\sum_{j:\sigma_j=-1} \Delta_j \vecn_j   \text{ \ and \ } b_\sigma =b-\sum_{j:\sigma_j=-1} \Delta_j o_j.
	$$
	\item[(iii)] 
	Let $d_1\in\N$ and  $\IW \in \cW^{d_0,d_1,1}$ be a network with breaklines $(\tilde \vecn_1,\tilde o_1),\ldots,(\tilde\vecn_{d_1},\tilde o_{d_1})$ and kinks $\tilde \Delta_1,\dots,\tilde \Delta_{d_1}\in\R$.
	The response  $\mathfrak N^\IW$  agrees with $f$ up to an affine function
	 iff for every breakline $S=(\vecm,o)$ of $f$ or $\IW$ 
	\begin{align}\label{eq93821}
	\sum_{j:(\tilde \vecn_j,\tilde o_j)=S} \tilde \Delta _j =\begin{cases} \Delta_j,& \text{ if $S$ is a breakline of $f$},\\
	0,&\text{ else.}\end{cases}
	\end{align}
	In particular, a representation  $\IW$ for the function $f$ possesses for each breakline of $f$ at least one neuron with the respective breakline.
	\end{enumerate}
\end{lemma}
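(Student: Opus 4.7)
The plan is to prove the three parts in order, exploiting throughout the single algebraic identity $(y)_+ - (-y)_+ = y$, which says that flipping the orientation of a ReLU costs only an affine correction. For part (i), existence proceeds by taking some representation $\mathfrak{N}^\IW$ of $f$ guaranteed by the hypothesis $f \in \cF$, applying this identity termwise to rewrite each summand with $\sigma_j = -1$ as one with orientation $+1$ at the cost of an affine correction, and then merging all neurons sharing a common breakline $(\vecn, o)$ by summing their kinks (discarding any breakline whose total kink is zero). What survives is an expression of the form $\mathfrak{N}^{(\vecn, o, \Delta)} + a \cdot x + b$ with pairwise distinct breaklines and non-zero kinks.

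For the uniqueness in (i), I would use a localized gradient-jump argument. Since there are only finitely many pairwise distinct breaklines, a generic point $x_0 \in P_j$ lies on no other $P_k$; in a small neighbourhood of $x_0$ all summands other than the $j$th are affine, so only the $j$th contributes a gradient discontinuity. A direct computation, already sketched in the excerpt just before the lemma, shows that the jump of $\nabla f$ across $P_j$ from the negative to the positive side equals $\Delta_j \vecn_j$ regardless of $\sigma_j$. Since $\vecn_j$ is the uniquely determined unit vector in $\cO_+$ attached to the hyperplane $P_j$, this jump pins down $\Delta_j$ in terms of $f$ alone. The remainder $f - \mathfrak{N}^{(\vecn, o, \Delta)}$ is then globally piecewise affine with continuous gradient across every breakline; its gradient is locally constant off a finite union of hyperplanes and thus constant by connectedness, so the remainder is affine and $a, b$ are uniquely determined as well.

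Part (ii) is direct algebra: applying $(y)_+ = (-y)_+ + y$ with $y = \vecn_j \cdot x - o_j$ to each $j$ with $\sigma_j = -1$ turns $\Delta_j (\vecn_j \cdot x - o_j)_+$ into $\Delta_j (\sigma_j (\vecn_j \cdot x - o_j))_+ + \Delta_j \vecn_j \cdot x - \Delta_j o_j$; summing these corrections over the relevant $j$ and combining with the existing $a \cdot x + b$ yields precisely $a_\sigma \cdot x + b_\sigma$. For (iii), I would first apply (ii) to $\mathfrak{N}^\IW$ itself to reduce to the all-positive-orientation case at the cost of an affine shift, and then group neurons sharing a breakline $S$ to produce a coefficient $\alpha_S := \sum_{j : (\tilde \vecn_j, \tilde o_j) = S} \tilde \Delta_j$ at each distinct breakline. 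The uniqueness part of (i) then says that $\mathfrak{N}^\IW - f$ is affine if and only if these coefficients $\alpha_S$ coincide with those attached to $f$, namely $\Delta_j$ on the breaklines of $f$ and $0$ everywhere else; this is exactly (\ref{eq93821}). The ``in particular'' claim follows at once, since $\Delta_j \ne 0$ forces at least one $\tilde \Delta_j$ with $(\tilde \vecn_j, \tilde o_j) = S$ to be non-zero.

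The step I expect to be the main obstacle is the uniqueness half of (i), specifically the passage from a local identification of each $\Delta_j$ on the generic part of the corresponding breakline to the global statement that the affine remainder is genuinely affine on all of $\R^{d_0}$ and not merely piecewise affine. The care required here is in verifying that the cell decomposition generated by the finitely many breaklines forces a piecewise-affine function with no gradient jump across any $P_j$ to have a globally continuous gradient, which then must be constant on the connected domain $\R^{d_0}$; once that is established, parts (ii) and (iii) are essentially bookkeeping on top of (i).
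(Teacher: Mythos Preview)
Your proposal is correct and follows essentially the same route as the paper: for (i) you flip orientations via $(y)_+-(-y)_+=y$, collect neurons by breakline into effective kinks, and read off uniqueness from the gradient jump across each hyperplane; (ii) and (iii) are then the same bookkeeping the paper does. One minor remark: your stated ``main obstacle'' is less delicate than you suggest, since once existence of a decomposition $f=\mathfrak N^{(\vecn,o,\Delta)}+a\cdot x+b$ is in hand, uniqueness only requires identifying $(\vecn,o,\Delta)$ from the non-differentiability locus and gradient jumps of $f$---the remainder is then \emph{known} to be affine from the existence step, and $a,b$ are fixed by evaluating at a single differentiability point, exactly as the paper does.
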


\begin{proof} 
(i): Let $d_1 \in \N$ and $\IW \in \cW^{d_0,d_1,1}$  with $\mathfrak N^\IW = f$.  We  denote by $(\tilde\vecn,\tilde o,\tilde \Delta,\tilde \sigma,\tilde b^2)$ the effective tuple of $\IW$ and let  $\cP=\{(\tilde\vecn_1,\tilde o_1),\dots,( \tilde\vecn_{d_1},\tilde o_{d_1})\}$  denote the breaklines of the neurons.
For a breakline $S\in \cP$ we call
$$
\Delta(S)= \sum_{j: (\tilde \vecn_j,o_j)=S} \tilde \Delta_j
$$
the effective kink. We note that for  $S=(\vecm,o)\in\cP$, the function
	\begin{align*}
		g_{S}(x)&:=\sum_{j:(\tilde \vecn_j,\tilde o_j)=S} \1_{\{\tilde \sigma_j (\vecm \cdot  x-o)>0\}}(\tilde \sigma_j \tilde\Delta_j (\tilde \vecn_j \cdot  x-o_j)) - \1_{\{\vecm \cdot x - o>0\}} \Delta(S) (\vecm \cdot x-o) \\
		&= -\sum_{j:(\tilde \vecn_j,\tilde o_j)=S}\1_{\{-1\}}(\tilde \sigma_j) \tilde \Delta_j ( \vecm \cdot x - o)
	\end{align*}
is affine. We enumerate the breaklines of $\cP$ that have non-zero effective kinks, say by $(\vecn_1,o_1),\dots,(\vecn_n,o_n)$ and note that for $\vecn=(\vecn_1,\dots,\vecn_n)$, $o=(o_1,\dots,o_n)$ and $\Delta=(\Delta(\vecn_1,o_1),\dots,\Delta(\vecn_n,o_n))$, one has
\begin{align*}
f(x)&=\mathfrak N^\IW(x)= b^2+ \sum_{S\in \cP} \sum_{j:(\tilde \vecn_j,\tilde o_j)=S} \1_{\{\tilde \sigma_j (\vecm \cdot  x-o)>0\}}(\tilde \sigma_j \tilde\Delta_j (\tilde \vecn_j \cdot  x-o_j))\\
&=  \sum_{\ell=1}^n \1_{\{\vecn_\ell \cdot  x - o_\ell>0\}} \Delta_\ell (\vecn_\ell \cdot  x-o_\ell) + \sum_{S\in\cP} g_S(x) +b^2=\mathfrak N^{(\vecn,o,\Delta)}(x)+ \sum_{S\in\cP} g_S(x) +b^2.
\end{align*}
Consequently, $f-\cN^{(\vecn,o,\Delta)}$ is affine.
	
	To show uniqueness note that as all kinks $\Delta_1, \dots, \Delta_n$ are non-zero we have
	$$
		\{x \in \R^{d_0}: f(x) \text{ is not differentiable in }x\} = \bigcup_{i=1}^n \{x \in \R^{d_0}: \vecn_i \cdot  x =o_i\} 
	$$
	and $\Delta_i \vecn_i$ is the difference of the differential on the positive and negative side of $\{x \in \R^{d_0}: \vecn_i \cdot x =o_i\}$,
	so that the breaklines $S_1=(\vecn_1, o_1), \dots, S_n= (\vecn_n,o_n)$ and the kinks $\Delta_1, \dots, \Delta_n$ are unique up to reordering. Now let $x \in\R^{d_0}$ such that $f$ is differentiable in $x$. Then,
	$$
		Df(x)= D\mathfrak N^{(\vecn, o, \Delta)}(x)+a 
	$$
	so that $a$ and $b$ are uniquely determined by $(\vecn, o, \Delta)$ and the gradient and function value of $f$ at point $x$.

(ii): We show formula~(\ref{rep1}). Again, note that 
$$
\Delta_j(\vecn_j \cdot x-o_j)_+-\Delta_j(-\vecn_j \cdot x+o_j)_+=\Delta_j(\vecn_j \cdot x-o_j)
$$
so that
\begin{align*}
\mathfrak N^{(\vecn,o,\Delta)}(x)-\mathfrak N^{(\vecn,o,\Delta,\sigma)}(x) &=\sum_{j:\sigma_j=-1} (\Delta_j(\vecn_j\cdot  x-o_j)_+-\Delta_j(-\vecn_j \cdot x+o_j)_+)\\
&=\sum_{j:\sigma_j=-1}\Delta_j(\vecn_j \cdot x-o_j).
\end{align*}
The statement now follows with
$$
f(x)=\mathfrak N^{(\vecn,o,\Delta)}(x) + a\cdot x+b= \mathfrak N^{(\vecn,o,\Delta,\sigma)}(x) +\sum_{j:\sigma_j=-1}\Delta_j(\vecn_j\cdot x-o_j)+a\cdot x+b.
$$

(iii): As we showed in (i), for every network $ \IW'\in \cW^{d_0,d_1',1}$ with breaklines $\cP'$ one has that  the response $\mathfrak N^{\IW'}$ agrees up to an affine function with
$$
\R^d\ni x\mapsto \sum_{S=(\vecm,o)\in  \cP'} \1_{\{\vecm \cdot  x-o>0\}}  \Delta'(S) (\vecm \cdot x-o),
$$ 
where $\Delta'(S)$ denotes the effective kink of the breakline $S$ in the network $ \IW'$. The latter function (and thus the response $\mathfrak N^{\IW'}$) is affine if and only if all effective kinks of $\IW'$ are zero.

Now note that using (i), $f-\mathfrak N^{\IW}$ is the response of a network $\IW'$ with breaklines being the union of the breaklines of $f$ and $\IW$. Now this function is affine if and only if the effective kinks of all breaklines of $\IW'$ are  zero which agrees with  the validity of~(\ref{eq93821}) for every breakline of $f$ and $\IW$.
\end{proof}

Let $f\in\cF$ and denote by $S_1=(\vecn_1,o_1),\ldots, S_n=(\vecn_n,o_n)$ the unique (up to reordering) breaklines of $f$. We will use the following notation: for $\vecn,\vecn'\in \cN:=\{\vecn_1, \ldots,  \vecn_n\}$, let
$$
J= \{\sigma \in\{\pm 1\}^{n}: a_\sigma=0\}, \ \  J(\vecn)=\{\sigma\in\{\pm 1\}^{n}:  a_\sigma \in \R \vecn\}
$$
and
$$
J(\vecn,\vecn')=\{\sigma\in\{\pm 1\}^{n}:  a_\sigma \in \R \vecn+\R \vecn'\}.
$$
Moreover, we let 
$$
\II(\vecn)=\{j\in\{1,\dots,n\}:\vecn_j=\vecn\}.
$$
Note that for a $\sigma \in J(\vecn)$ and a $j \in \II(\vecn)$ we also have $(\sigma_1, \dots, \sigma_{j-1}, -\sigma_j, \sigma_{j+1}, \dots, \sigma_n) \in J(\vecn)$. Therefore, in that case one can always find a $\sigma \in J(\vecn)$ with $\sigma_j=1$. Analogously, for $\sigma \in J(\vecn, \vecn')$ and $j_1 \in \II(\vecn)$ and $j_2 \in \II(\vecn')$ one can switch the $j_1$th and $j_2$th orientation and still get a tuple of orientations in $J(\vecn, \vecn')$.

\begin{theorem} \label{thm:main}
Let $f\in\cF$ be non-affine and denote by $S_1=(\vecn_1,o_1),\ldots, S_n=(\vecn_n,o_n)$ the unique (up to reordering) breaklines of $f$ and by $\Delta_1, \dots, \Delta_n$ the respective kinks. 
\begin{enumerate}
\item[(i)] If the set $J$ is not empty, then the minimal representation for $f$ uses $n$ neurons and there is a bijection between the  set of all effective tuples $\cE_n$  with response $f$ modulo permutation and the set
$$
J.
$$
\item[(ii)] If the set $J$ is empty and for a $\vecm\in\mathcal N:=\{\vecn_1,\dots,\vecn_n\}$ the set
$
J(\vecm)$
is not empty, then the minimal representation for $f$ uses $n+1$ neurons and there is a bijection between the set of effective tuples $\cE_{n+1}$ with response $f$ modulo permutation and
$$
 \bigcup_{\vecn\in \cN: J(\vecn)\not =\emptyset} \bigl\{(\sigma,j)\in  J(\vecn)\times \II(\vecn): \sigma_j=1\bigr\}.
$$
\item[(iii)] If for every $\vecn\in\cN$, $J(\vecn)=\emptyset$, then the minimal representation for $f$ uses $n+2$ neurons and there is bijection between the set of effective tuples $\cE_{n+2}$ with response $f$ modulo permutation and
$$
 \{\pm1\}^n\times \R \cup\bigcup_{\vecn,\vecn'\in  \cN:J(\vecn,\vecn')\not= \emptyset, \vecn<\vecn'} \bigl\{(\sigma,j_1,j_2)\in  J(\vecn,\vecn')\times \II(\vecn)\times \II(\vecn') : \sigma_{j_1}=\sigma_{j_2}=1\bigr\},
$$
where ``$<$'' refers to an arbitrarily fixed ordering of $\cN$.
\end{enumerate}
\end{theorem}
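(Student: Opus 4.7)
The plan is to apply Lemma \ref{le:std_rep}(iii) to classify each $(n+k)$-neuron representation of $f$ by how its neurons are distributed across breaklines, then to use Lemma \ref{le:std_rep}(ii) to translate each admissible distribution into an algebraic condition on a compressed orientation tuple in $\{\pm 1\}^n$. By Lemma \ref{le:std_rep}(iii), every representation carries each breakline $S_i$ of $f$ with matching effective kink $\Delta_i$, and any extra breakline (not among the $S_i$) with zero effective kink. Since each neuron has non-zero kink, any extra breakline must be shared by at least two neurons.

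For case (i), $d_1=n$ forces all neurons onto distinct breaklines of $f$ with kinks $\Delta_i$. Lemma \ref{le:std_rep}(ii) gives $\mathfrak{N}^{(\vecn,o,\Delta,\sigma,b^2)}=f-a_\sigma\cdot x-b_\sigma+b^2$, so representation holds iff $\sigma\in J$ and $b^2=b_\sigma$, producing the bijection with $J$. For case (ii), $d_1=n+1$, and the single extra neuron cannot occupy a fresh breakline (a lone neuron yields non-zero effective kink there), so exactly one $S_\ell$ carries two neurons; their orientations must differ (if equal, the pair collapses to one effective neuron, reducing to case (i), which fails since $J=\emptyset$). Rewriting the $\sigma=-1$ neuron on $S_\ell$ (kink $\beta$) via $(-y)_+=(y)_+-y$, the two neurons on $S_\ell$ together contribute $\Delta_\ell(\vecn_\ell\cdot x-o_\ell)_+-\beta(\vecn_\ell\cdot x-o_\ell)$. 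Substituting into the response with compressed orientation $\sigma\in\{\pm 1\}^n$ (with $\sigma_\ell=1$) and applying Lemma \ref{le:std_rep}(ii) transforms $\mathfrak{N}=f$ into $a_\sigma=-\beta\vecn_\ell$, i.e.\ $\sigma\in J(\vecn_\ell)$, which determines $\beta$, the complementary kink $\alpha=\Delta_\ell-\beta$, and $b^2$. Non-degeneracy $\beta\neq 0$ follows from $J=\emptyset$ (hence $a_\sigma\neq 0$), and $\alpha\neq 0$ by the same argument applied to the orientation flipped at $\ell$. Varying $\ell\in\II(\vecn)$ and $\sigma\in J(\vecn)$ with $\sigma_\ell=1$ gives the claimed bijection.

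For case (iii) with $d_1=n+2$, a simple counting argument on $\sum m_i+\sum m'_j=n+2$ (with $m_i\geq 1$ on each $S_i$ and $m'_j\geq 2$ on each extra breakline) shows that the only viable distributions are: (B) one extra breakline with two opposite-oriented neurons (kinks $\alpha,-\alpha$); (C) two distinct $f$-breaklines each doubled with opposite-orientation neurons; or (D) one $f$-breakline tripled. Option (D) reduces, by the same collapse as in case (ii), to requiring $J(\vecn)\neq\emptyset$ for the tripled direction, which is excluded. For (B), the two extras contribute affine $\alpha(\vecm\cdot x-o)$; matching $f$ forces $a_\sigma=\alpha\vecm$, and since $J=\emptyset$ implies $a_\sigma\neq 0$, the unit vector $\vecm\in\cO_+^1$ and scalar $\alpha\neq 0$ are uniquely determined by $\sigma$, leaving $o\in\R$ free. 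Automatically $\vecm\notin\cN$ (else $\sigma\in J(\vecm)=\emptyset$), so the extra breakline is genuinely new, yielding the $\{\pm 1\}^n\times\R$ component. For (C), the two doubled breaklines contribute combined affine $-\beta_1\vecn_{\ell_1}\cdot x-\beta_2\vecn_{\ell_2}\cdot x+\mathrm{const}$, forcing $a_\sigma\in\R\vecn_{\ell_1}+\R\vecn_{\ell_2}$; if $\vecn_{\ell_1}=\vecn_{\ell_2}$ this would require $\sigma\in J(\vecn)=\emptyset$, so the two directions differ, and the strict ordering $\vecn<\vecn'$ eliminates the double-counting from swapping the doubled breaklines. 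The indices $j_1\in\II(\vecn)$, $j_2\in\II(\vecn')$ with $\sigma_{j_1}=\sigma_{j_2}=1$ select the doubled breaklines and fix the convention that the $+$-oriented neuron on each is \emph{main}.

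The main obstacle will be the combinatorial bookkeeping: ruling out all unlisted distributions (especially tripling in (iii) and any hybrid of doubling one $f$-breakline together with adding a new breakline), verifying that each physical effective tuple modulo permutation corresponds to exactly one element on the index side under the conventions $\sigma_j=1$ (and $\sigma_{j_1}=\sigma_{j_2}=1$) together with $\vecn<\vecn'$, and checking in each case that the kinks $\alpha,\beta,\beta_1,\beta_2$ are automatically non-zero thanks to the emptiness hypotheses on the $J$-sets at lower levels.
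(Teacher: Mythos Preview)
Your proposal is correct and follows essentially the same approach as the paper's proof: both arguments use Lemma~\ref{le:std_rep}(iii) to force every neuron onto a breakline of $f$ (or onto an extra breakline shared by at least two neurons), then invoke Lemma~\ref{le:std_rep}(ii) to convert the representation condition into $a_\sigma$ lying in $\{0\}$, $\R\vecn$, or $\R\vecn+\R\vecn'$, and finally carry out the case analysis on how the surplus neurons are distributed. Your explicit enumeration of the distributions via the counting $\sum m_i+\sum m'_j=n+2$ and your separate treatment of the tripled case (D) make the exclusion of unlisted configurations a bit more transparent than in the paper, and your remark that the non-degeneracy of the auxiliary kinks $\alpha,\beta,\beta_1,\beta_2$ follows from the emptiness of the lower-level $J$-sets is a point the paper leaves implicit.
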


\begin{proof}We characterize the set of effective tuples modulo permutation. We enumerate the breaklines $(\vecn_1,o_1),\dots,(\vecn_n, o_n)$ and denote by   $\Delta_1,\dots,\Delta_n\in\R\backslash \{0\}$ the respective kinks of $f$.


(i): In view of Lemma~\ref{le:std_rep}, we have for $\sigma\in\{\pm1\}^n$ and $b^2\in\R$ that $f=\mathfrak N^{(\vecn, o, \Delta,\sigma, b^2)}$ if and only if
$$
a_\sigma=0 \text{ \ and \ } b^2=b_\sigma.
$$
Hence, if $J$ is not empty, then there exists a representation for $f$ with $n$ neurons and we get for every $\sigma\in J$, that  $(\vecn, o, \Delta,\sigma, b_\sigma)$ is a different representation for $f$. 

It remains to prove that up to the order of the neurons these are the only representations with $n$ neurons. By Lemma~\ref{le:std_rep}, $(\vecn,o,\Delta)$ is the unique tuple (up to reordering the breaklines) for which $f$ and $\mathfrak N^{\vecn,o, \Delta}$ coincide with $f$ up to an affine function. Hence, there are no other representations beyond the ones from above.

(ii): If $J=\emptyset$, then there is no representation for $f$ using $n$ neurons. Now let $\vecn\in\cN$ with $J(\vecn)\not =\emptyset$. Take $j\in \II(\vecn)$ and  $\sigma\in J(\vecn)$ with $\sigma_j=1$.
We will provide a representation for $f$ with  $n+1$ neurons depending on the choice of $\vecn$, $j\in \II(\vecn)$ and $\sigma\in J(\vecn)$. We choose for every $i=1,\dots,n$, $(\vecn'_{i},o_i',\sigma_i')=(\vecn_i,o_i,\sigma_i)$ and $(\vecn'_{n+1},o_{n+1}',\sigma_{n+1}')=(\vecn_j,o_j,-1)$ meaning that we added one neuron with the same breakline as the $j$th neuron with the $j$th neuron being positively and the $n+1$st negatively oriented. Moreover, we choose $ \Delta'\in \R^{n+1}$ with $\Delta'_i=\Delta_i$ for $i\in\{1,\dots, n\}\backslash\{j\}$.
Since $\sigma\in J(\vecn)$ we have for a $\delta _\sigma\in \R$ that $a_\sigma =\delta_\sigma \vecn_j$. 
Using that 
$$
f(x)= \mathfrak N^{(\vecn,o,\Delta,\sigma)}(x)+ \delta_\sigma \vecn_j \cdot x+b_\sigma
$$
and
$$
\mathfrak N^{(\vecn',o',\Delta',\sigma',b^2)}(x)  = \mathfrak N^{(\vecn,o,\Delta,\sigma)}+ \bigl(\1_{A_j}(x) \Delta_j'- \1_{A_j^c}(x) \Delta_{n+1}'- \1_{A_j}(x) \Delta _j\bigr)(\vecn_j \cdot x -o_j) +b^2
$$
we get that $f=\mathfrak N^{(\vecn',o',\Delta',\sigma',b^2)}$ if and only if $\Delta_j'$, $\Delta_{n+1}'$ and $b^2$ are such that
\begin{align}\label{eq94571}
\Delta_j'= \Delta_j+ \delta_\sigma, \ \Delta_{n+1}'=-\delta_\sigma \text{ \ and \ }  b^2=\delta_\sigma o_j +b_\sigma.
\end{align}
In particular, we showed that in this case there are representations with $n+1$ neurons that are minimal.
The above construction is injective in the sense that every tuple $(\vecn, j, \sigma)$ as above produces a different representation for $f$ (modulo reordering).
Indeed, this is the case, since whenever, for two tuples $(\vecn, j, \sigma)$ and $(\vecn', j', \sigma')$ as above, such that $j$ does not agree with $j'$ (or $\vecn$ does not agree with $\vecn'$), then the breakline that is shared by  two neurons is different for the two tuples and the representations are different even modulo reordering. Moreover, in the case where $j=j'$ and $\sigma\not=\sigma'$ there is a breakline that is associated with exactly one neuron in both representations, but in the one representation the neuron is positively and in the other representation the neuron is negatively oriented. 
Again the two representations disagree even when allowing reordering.

It remains to show that all representations are obtained by the above construction.
 By Lemma~\ref{le:std_rep} (iii), there exists for each breakline of $f$ at least one neuron with this breakline.  So after discarding one neuron for each breakline there will be one neuron left. By minimality this neuron has a kink and if it would induce a breakline that does not coincide with one of the other neurons, then $f$ would have $n+1$ breaklines which contradicts the assumption. So there has to be one breakline that is shared by two neurons. If the orientation of the two neurons  would agree one could reduce the number of neurons by discarding one of the neurons and taking for the remaining one the effective kink being the sum of the two kinks.
 By reordering we can assure that the neurons $1,\dots,n$ have  breaklines $(\vecn_1,o_1), \dots,(\vecn_n,o_n)$ and we denote by $j$ the index of the breakline that is shared by two neurons. We suppose that  the $j$th neuron is positively oriented and the $n+1$ neuron is negatively oriented (otherwise we interchange the neurons) and we denote by $(\vecn',o',\Delta',\sigma',b^2)$ the effective tuple of the respective network. 
We will show that the latter network is obtained in the above construction when starting with $\vecn_j$, $\sigma$ being the orientation of the first $n$ neurons and the index $j$ of the doubled breakline. 
 
By Lemma~\ref{le:std_rep} (iii), $\Delta_i=\Delta_i'$ for $i\in\{1,\dots,n\}\backslash \{j\}$ and $\Delta_j=\Delta_j'+\Delta_{n+1}'$. Hence, noting that
$$
f(x)= \mathfrak N^{(\vecn,o,\Delta,\sigma)}(x)+a_\sigma \cdot x+b_\sigma
$$
and
\begin{align*}
f(x)&=\mathfrak N^{(\vecn',o',\Delta',\sigma',b^2)}(x) \\
&= \mathfrak N^{(\vecn,o,\Delta,\sigma)}(x)+ \bigl(\1_{A_j}(x) \Delta_j'- \1_{A_j^c}(x) \Delta_{n+1}'- \1_{A_j}(x) \Delta _j\bigr)(\vecn_j \cdot  x -o_j)+b^2
\end{align*}
  we conclude that $a_\sigma=-\Delta'_{n+1} \vecn_j$ and $b^2=-\Delta_{n+1}'o_j+b_\sigma$. With $\delta_\sigma=-\Delta_{n+1}'$ we get validity of all equations in~(\ref{eq94571}). In particular,
 $$
 f(x)=\mathfrak N^{(\vecn,o,\Delta,\sigma)}(x)+\delta_\sigma \vecn_j \cdot  x+b_\sigma
 $$
 so that  $\sigma\in J(\vecn_j)$.

(iii): Now we suppose that for every $\vecm \in \cN$ the set $J(\vecm)$ is empty in which case there do not exist representations of $f$ with $n+1$ neurons.

We provide two procedures to construct representations with $n+2$ neurons for $f$. First let $\sigma\in\{\pm1\}^n$ and $r\in \R$ arbitrary. We consider a network 
$(\vecn', o',\Delta', \sigma',b^2)$ with $n+2$ neurons that is obtained as follows. We let
for $i=1,\dots,n$, $(\vecn'_i, o'_i,\Delta'_i,\sigma'_i)=(\vecn_i,o_i,\Delta_i,\sigma_i)$ and  $$(\vecn_{n+1}',o'_{n+1})=(\vecn_{n+2}', o'_{n+2})= \Bigl(\frac {\orient (a_\sigma)}{| a_\sigma|}\, a_\sigma, r\Bigr),\, \sigma_{n+1}'=1, \, \sigma_{n+2}'=-1.$$
We note that
$$
f(x)= \mathfrak N^{(\vecn,o,\Delta,\sigma)}(x) +a_\sigma \cdot  x+b_\sigma
$$
and
$$
\mathfrak N^{(\vecn',o',\Delta',\sigma',b^2)}(x)= \mathfrak N^{(\vecn,o,\Delta,\sigma)}(x) +\Delta_{n+1}' (\vecn_{n+1}' \cdot x-o_{n+1}')_++\Delta_{n+2}' (-(\vecn_{n+1}' \cdot x-o_{n+1}'))_++b^2.
$$
Consequently, $\mathfrak N^{(\vecn',o',\Delta',\sigma',b^2)}=f$ iff $\Delta'_{n+1}$, $\Delta'_{n+2}$ and $b^2$ satisfy
$$
\Delta'_{n+1}=-\Delta_{n+2}'= \orient(a_\sigma) \,|a_\sigma| \text{ and } b^2=  \orient(a_\sigma) \,|a_\sigma| \,r+b_\sigma.
$$
Thus, we constructed for every given $\sigma\in\{\pm1\}^n$ and $r\in \R$ an effective  neural network that represents $f$. Note that for any choice of the parameters the first $n$ neurons have the breaklines of the function $f$ with orientations being given by $\sigma$ and the $n+1$st and $n+2$th neuron have identical breaklines that are not in $\cN$; the former one being positively oriented and the latter one being negatively oriented. This entails that two different tuples $(\sigma,r)$ and $(\tilde \sigma,\tilde r)$ yield different representations even modulo reordering of the neurons. Indeed, if $\sigma\not=\tilde \sigma$, then there is a breakline of $f$ that appears  in both representations once but with opposite orientations. Thus the effective representations are different. Moreover, if $\sigma=\tilde \sigma$ but $r\not =\tilde r$, then both representations have exactly two neurons with identical breaklines but the respective breaklines disagree since they have different offsets $r$ and $\tilde r$.

 We proceed with a second procedure to construct further representations that are not covered by the first construction. Let $\vecm,\vecm'\in \cN$ with $\vecm<\vecm'$ and $J(\vecm,\vecm')\not =\emptyset$. We choose $j_1\in \II(\vecm)$ and $j_2\in \II(\vecm')$ and let $\sigma \in J(\vecm, \vecm')$ with $\sigma_{j_1}=\sigma_{j_2}=1$. We define an effective network $(\vecn',o',\Delta',\sigma',b^2)$ that represents $f$ by letting
 for $i=1,\dots,n$, $(\vecn_i',o_i',\sigma_i')=(\vecn_i,o_i,\sigma_i)$ and, for $l\in\{1,2\}$,
 $$
 (\vecn_{n+l}',o_{n+l}',\sigma_{n+l}')=(\vecn_{j_l},o_{j_l},-1).
$$
This  means that we add two neurons with the same breaklines as the $j_1$th and the $j_2$nd  neuron but with negative (opposite) orientation. Moreover, we let for $i\in\{1,\dots,n \} \backslash \{j_1,j_2\}$, $\Delta_i'=\Delta_i$.
Using that
\begin{align*}
\mathfrak N^{(\vecn',o',\Delta',\sigma',b^2)}(x)&= \mathfrak N^{(\vecn,o,\Delta,\sigma)}(x)+ \bigl(\1_{A_{j_1}}(x) \Delta_{j_1}'- \1_{A_{j_1}^c}(x) \Delta_{n+1}'- \1_{A_{j_1}}(x) \Delta _{j_1}\bigr)(\vecn_{j_1} \cdot x -o_{j_1})\\
&\qquad + \bigl(\1_{A_{j_2}}(x) \Delta_{j_2}'- \1_{A_{j_2}^c}(x) \Delta_{n+2}'- \1_{A_{j_2}}(x) \Delta _{j_2}\bigr)(\vecn_{j_2} \cdot  x -o_{j_2})+b^2
\end{align*}
and representation~(\ref{rep1}) we get that $\mathfrak N^{(\vecn',o',\Delta',\sigma',b^2)}=f$ iff
\begin{align*}
a_\sigma x+b_\sigma& =\bigl(\1_{A_{j_1}}(x) \Delta_{j_1}'- \1_{A_{j_1}^c}(x) \Delta_{n+1}'- \1_{A_{j_1}}(x) \Delta _{j_1}\bigr)(\vecn_{j_1} \cdot x -o_{j_1})\\
&\qquad + \bigl(\1_{A_{j_2}}(x) \Delta_{j_2}'- \1_{A_{j_2}^c}(x) \Delta_{n+2}'- \1_{A_{j_2}}(x) \Delta _{j_2}\bigr)(\vecn_{j_2} \cdot  x -o_{j_2})+b^2.
\end{align*}
Recalling that by assumption $J(\vecm,\vecm')\not =\emptyset$ and that $\vecm,\vecm'$ are linearly independent we conclude that there exist unique values $\delta_ {1},\delta_ {2}\in \R$ with
$$
a_\sigma=\delta_{1} \vecn_{j_1}  +\delta_{2} \vecn_{j_2}.
$$
We conclude that $\mathfrak N^{(\vecn',o',\Delta',\sigma',b^2)}=f$ iff for $l \in \{1,2\}$ 
\begin{align}\label{eq945712}
\Delta_{j_l}'= \Delta_{j_l}+ \delta_l, \ \Delta_{n+l}'=-\delta_l \text{ \ and \ }  b^2=\delta_1 o_{j_1}+ \delta_2 o_{j_2} +b_\sigma.
\end{align}
The above construction is injective in the sense that every tuple $(\vecm, \vecm', j_1, j_2, \sigma)$ as above produces a different representation for $f$ (modulo reordering).
Indeed, let $(\vecm, \vecm', j_1, j_2, \sigma)$ and $(\tilde\vecm, \tilde\vecm', \tilde j_1, \tilde j_2, \tilde \sigma)$ be two different tuples as above. Note that the two added neurons $n+1$ and $n+2$ both are negatively oriented. So if $\sigma\not =\tilde \sigma$, there is a breakline of $f$ that is addressed by a positively oriented neuron in one representation but only with negatively oriented neurons in the other representation. Moreover, if $(\vecm, \vecm')\not =( \tilde\vecm, \tilde\vecm')$, then $(j_1, j_2)\not =(\tilde j_1,\tilde j_2)$ and generally if $(j_1, j_2)\not =(\tilde j_1,\tilde j_2)$, then the number of neurons that correspond to the $j_1$th or $j_2$nd breakline of~$f$ disagree and again the effective representations disagree even when allowing reordering of the neurons. 

Also, effective networks that are created by the second construction have two breaklines sharing two neurons whereas effective networks created by the former construction only have one breakline sharing two neurons, so that the set of effective networks created by the two different constructions are disjoint.

It remains to show that all representations are obtained by the above two constructions. Let $(\vecn',o',\Delta',\sigma',b^2)$ be a minimal  effective representation of $f$ with $n+2$ neurons.

By Lemma~\ref{le:std_rep} (iii), there exists for each breakline of $f$ at least one neuron with this breakline.  So after discarding one neuron for each breakline there will be two neurons left. By minimality these neurons have a non-vanishing kink and a breakline. Suppose that one of the remaining two neurons has a breakline, say $(\vecm,r)$ that is not in $\cN$. If the breakline would not agree with the one of the second remaining neuron, then the effective representation describes a function that  is not differentiable on the breakline $(\vecm,r)$ and thus is not equal to $f$. In that case the breaklines of the two remaining neurons would be identical.
 Also, by minimality each breakline has at most for every orientation one neuron  so that one of the following cases prevails:
\begin{enumerate}
\item The two remaining neurons have identical breaklines that are not in $\cN$. 
	\item There exist $1\le j_1 < j_2 \le n$ such that there exist two neurons with breaklines  $(\vecn_{j_1},o_{j_1})$, resp. $(\vecn_{j_2},o_{j_2})$ (one of them having positive orientation and the other having negative orientation).
\end{enumerate}

(1): By reordering we can assure that the neurons $1,\dots,n$ have  breaklines $(\vecn_1,o_1), \dots,(\vecn_n,o_n)$, and the $n+1$ and $n+2$ have the breakline $(\vecm, r)$ with the former neuron being positively oriented and the latter one being negatively oriented. 
Using that $(\vecm,r)\notin \cN$ 
we get with Lemma~\ref{le:std_rep} (iii) that $$\Delta_{n+1}'=-\Delta_{n+2}'\text{ \ and \ } \Delta_k'=\Delta_k, \text{ for }k=1,\dots,n.$$ 
We set $\sigma_k=\sigma_k'$ for $k=1,\dots,n$ and recall that
\begin{align*}
	f(x)&= \mathfrak N^{(\vecn,o,\Delta,\sigma)}(x) +a_\sigma \cdot  x+b_\sigma.
\end{align*}
Again we use that
$$
 \mathfrak N^{(\vecn',o',\Delta',\sigma',b^2)}(x) = \mathfrak N^{(\vecn,o,\Delta,\sigma)}(x) +\Delta_{n+1}'(\vecm \cdot  x-r)+b^2
$$
to conclude that $ \mathfrak N^{(\vecn',o',\Delta',\sigma',b^2)}=f$ iff
$$
	a_\sigma \cdot  x+b_\sigma = \Delta_{n+1}'(\vecm \cdot  x- r)+b^2
$$
or, equivalently,
$$
	\vecm = \frac{\orient(a_\sigma)}{|a_\sigma|}a_\sigma, \ \Delta_{n+1}'=-\Delta_{n+2}'=\orient(a_\sigma)|a_\sigma| 
$$
and $b^2 = b_\sigma+\Delta_{n+1}' r$. Thus the effective representation is obtained by the first construction when choosing the respective orientation $\sigma$ and offset $r$. 

(2): By reordering we can assure that the neurons $1,\dots,n+2$ have  breaklines $(\vecn_1,o_1), \dots,(\vecn_n,o_n),$ $(\vecn_{j_1},o_{j_1}),(\vecn_{j_2},o_{j_2})$, where the $j_1$th and $j_2$th neuron are positively  and the $n+1$th and $n+2$th neuron are negatively oriented and we set $\sigma_k=\sigma_k'$ for $k=1, \dots, n$.

By Lemma~\ref{le:std_rep} (iii), $\Delta_k=\Delta_k'$ for $k\in\{1,\dots,n\}\backslash \{j_1, j_2\}$ and, for $l \in \{1,2\}$, $\Delta_{j_l}=\Delta_{j_l}'+\Delta_{n+l}'$. Hence, noting that
$$
f(x)= \mathfrak N^{(\vecn,o,\Delta,\sigma)}(x)+a_\sigma \cdot  x+b_\sigma
$$
and
\begin{align*}
f(x)&=\mathfrak N^{(\vecn',o',\Delta',\sigma',b^2)}(x)\\
&= \mathfrak N^{(\vecn,o,\Delta,\sigma)}(x)+ \bigl(\1_{A_{j_1}}(x) \Delta_{j_1}'- \1_{A_{j_1}^c}(x) \Delta_{n+1}'- \1_{A_{j_1}}(x) \Delta _j\bigr)(\vecn_j \cdot  x -o_j)\\
&\ \ \ + \bigl(\1_{A_{j_2}}(x) \Delta_{j_2}'- \1_{A_{j_2}^c}(x) \Delta_{n+2}'- \1_{A_{j_2}}(x) \Delta _j\bigr)(\vecn_j \cdot  x -o_j)+b^2
\end{align*}
we conclude that for  $\delta_{l}:=-\Delta_{n+l}'$ ($l=1,2$),
$$ \Delta_{j_l}'=\Delta_{j_l}+\delta_l, \ \Delta'_{n+l}=-\delta_l, \ a_\sigma=\delta_1  \vecn_{j_1}+\delta_2\vecn_{j_2}\text{ \ and \ }b^2=\delta_{1}o_{j_1}+\delta_{2}o_{j_2}+b_\sigma$$
so that (\ref{eq945712}) is satisfied and, in particular, $\sigma\in J(\vecn_{j_1},\vecn_{j_2})$. Hence, the representation is obtained when following the second construction with $\vecm=\vecn_{j_1}$, $\vecm'=\vecn_{j_2}$ and $\sigma$, $j_1$ and $j_2$ as above. 
\end{proof}

\begin{rem}[One dimensional input ($d_0=1$)]
	Suppose that $d_0=1$ and choose $\cO_+=(0,\infty)$  and $\cO_-=(-\infty,0)$. Note that in this case there is a unique positive normal namely $\vecm = 1$. Hence, in that case effective networks with $d_1$ neurons in the hidden layer are parameterized by a tuple $$(o, \Delta,\sigma, b^2)\in \R^{d_1}\times \R^{d_1} \times \{\pm1\}^{d_1}\times \R.$$
	Now let $f\in\cF$ having $n$ breaklines. We note that since  $J(1)=\{\pm1\}^n\not=\emptyset$, case three of Theorem~\ref{thm:main} cannot enter. Therefore, we have the following dichotomy:
	\begin{enumerate}
		\item[(i)] If the set $J$ is not empty, then the minimal representation for $f$ uses $n$ neurons and there is a bijection between the  set of all effective tuples $\cE_n$  with response $f$ modulo permutation and the set
		$$
		J.
		$$
		\item[(ii)] If the set $J$ is empty, then the minimal representation for $f$ uses $n+1$ neurons and there is a bijection between the set of effective tuples $\cE_{n+1}$ with response $f$ modulo permutation and
		$$
		\{(\sigma,j) \in\{\pm1\}^{n}\times\{1,\dots,n\}: \sigma_j=1\}\simeq \{\pm 1\}^{n-1} \times \{1, \dots, n\}.
		$$
	\end{enumerate}
\end{rem}

\begin{rem} (Affine realization function)  Let $f(x)=a\cdot x+b$ for $a\in \R^{d_0} \backslash\{0\}$ and $b \in \R$. Then, analogously to the case (iii) in Theorem~\ref{thm:main}, one can show that a minimal representation for $f$ uses $2$ neurons. Further
	\begin{align} \label{eq:manifold2}
		\phi:(0,\infty)^2 \times \R \to \cW^{d_0,d_1,1} \; ; \; r \mapsto(\tilde W^1, \tilde b^1,\tilde W^2, \tilde b^2),
	\end{align}
	where
	$$
		\tilde W^1=\Bigl(\orient(a)r_1\frac{a}{|a|}, -\orient(a)r_2\frac{a}{|a|}\Bigr)^\dagger \; , \; \tilde b^1= (-\orient(a)r_1r_3, \orient(a)r_2r_3),
	$$
	$$
		\tilde W^2 = \Bigl( \frac{|a|}{r_1}, \frac{|a|}{r_2} \Bigr) \; \text{ and } \; \tilde b^2=b+|a|r_3	
	$$
	is a $C^\infty$-diffeomorphism onto the networks having two neurons (the first one positively orientated and the second one negatively) with response $f$. Therefore, the set of networks $\IW \in \cW^{d_0,2,1}$ with response $f$ forms a $3$-dimensional $C^\infty$-manifold having $2$ connected components.
\end{rem}

\begin{rem} (Manifold of minimal representations) Using the mapping (\ref{eq:manifold}) and (\ref{eq:manifold2}) we can characterize the manifold of minimal representations for a given function $f:\R^{d_0} \to \R$ having $n$ breaklines. 
	\begin{enumerate}
		\item[(i)] If the set $J$ is not empty, then the set of all networks $\IW \in \cW^{d_0,n,1}$ with response $f$ forms a $n$-dimensional $C^\infty$-manifold having $n! \cdot |J|$ connected components.
		\item[(ii)] If the set $J$ is empty and for a $\vecm \in \cN$ the set $J(\vecm)$ is not empty, then the set of all networks $\IW \in \cW^{d_0,n+1,1}$ with response $f$ forms a $n+1$-dimensional $C^\infty$-manifold having $$
			(n+1)! \cdot \Bigl( \sum\limits_{n \in \cN:J(\vecn)\neq \emptyset} \bigl| \bigl\{(\sigma,j)\in  J(\vecn)\times \II(\vecn): \sigma_j=1\bigr\} \bigr| \Bigr)
		$$
		connected components.
		\item[(iii)] If for every $\vecn, \vecn'\in\cN$, $J(\vecn,\vecn')=\emptyset$, then the set of all networks $\IW \in \cW^{d_0,n+2,1}$ with response $f$ forms a $n+3$-dimensional $C^\infty$-manifold having $(n+2)! 2^n$ connected components. 
		
		If, however, for every $\vecn \in \cN$, $J(\vecn)=\emptyset$, but there exist $\vecn, \vecn'\in\cN$ with  $J(\vecn,\vecn')\neq\emptyset$ then the set of all networks $\IW \in \cW^{d_0,n+2,1}$ with response $f$ forms a disjoint union of two different $C^\infty$-manifolds: One of dimension $n+3$ having $(n+2)! \cdot  2^n$ connected components and one of dimension $n+2$ having
		$$
			(n+2)! \cdot \Bigl(  \sum\limits_{\vecn,\vecn'\in  \cN:J(\vecn,\vecn')\not= \emptyset, \vecn<\vecn'} \bigl| \bigl\{(\sigma,j_1,j_2)\in  J(\vecn,\vecn')\times \II(\vecn)\times \II(\vecn') : \sigma_{j_1}=\sigma_{j_2}=1\bigr\} \bigr|\Bigr)
		$$
		connected components.
	\end{enumerate}
\end{rem}

\section{Representability of piecewise affine functions via shallow ReLU networks}

In this section, we discuss which continuous, piecewise affine functions
 $f:\R^{d_0} \to \R$ can be represented as response of a neural network. 

\begin{defi}
	Let $f:\R^{d_0} \to \R$ be a continuous function, $n \in \N$ and $S_1, \dots, S_n$ be  distinct hyperplanes  in $\R^{d_0}$. We call $f$ \emph{piecewise affine with breaklines $S_1, \dots, S_n$} if $f$ is 
	affine on each connected component of $\R^{d_0}\setminus (S_1\cup \ldots\cup S_n)$.
	We call the connected components of  $\R^{d_0}\setminus (S_1\cup \ldots\cup S_n)$ \emph{cells} of the breaklines $S_1,\dots,S_n$.
\end{defi}

As a first result, we note that for  $d_0>1$ there are  hyperplanes $S_1,\dots,S_n$ and piecewise affine functions $f$ with breaklines $S_1,\dots,S_n$ that cannot be represented as realization function of a shallow network.
\begin{exa}[A piecewise affine function that is not representable by a shallow network]
	Consider the piecewise affine function
	$$
	f(x,y) =  \begin{cases} 0, & \text{ if } |x|\ge |y| \text{ and } \sign(x)=-\sign(y) \text{ or } y=0,\\
	 y, & \text{ if } \sign(x)=\sign(y), \\ 
	 x+y, & \text{ else.}\end{cases} 
	$$
	Assume that there exist $d_1 \in \N$ and $\IW \in \cW^{2,d_1,1}$ with $\mathfrak N^\IW=f$.
	As consequence, see (\ref{eq:response}), the function  
	$$
	\R \setminus \{0\} \ni x \mapsto \lim\limits_{y \searrow 0} \nabla f(x,y)- \lim\limits_{y \nearrow 0} \nabla f(x,y)
	$$
	needs to be constant. But
	$$
	\lim\limits_{y \searrow 0} \nabla f(-1,y)- \lim\limits_{y \nearrow 0} \nabla f(-1,y) =\begin{pmatrix} 0 \\ -1 \end{pmatrix} \neq \begin{pmatrix} 0 \\ 1 \end{pmatrix} =  \lim\limits_{y \searrow 0} \nabla f(1,y)- \lim\limits_{y \nearrow 0} \nabla f(1,y)
	$$
	yields a contradiction and $f$ cannot be represented as response of a network.
\end{exa}

We note that in the example the function has  three breaklines that all meet in zero. As the following lemma shows  a function is representable, if all breaklines having an arbitrarily fixed point in common   have linear independent normals.  
Obviously, this property does not hold in the counterexample. In particular, we note that for the next lemma to hold one at least needs that there are at most $d_0$ breaklines that have an arbitrarily fixed point in common.

\begin{theorem} \label{lem:dim}
	Let $f:\R^{d_0} \to \R$ be a continuous and piecewise affine  function with breaklines $S_1, \dots S_n$. 
	If
	for all $x \in \R^{d_0}$ with $\# \rho(x)\neq 0$ we have 
	\begin{align} \label{eq:assurep}
	\dim\Bigl( \bigcap\limits_{j \in \rho(x)} S_j \Bigr) =d_0-\# \rho(x),
	\end{align}
	where $\rho(x):=\{j \in \{1, \dots, n\}: x \in S_j\}$, 
	then there exists a network $\IW \in \cW^{d_0,n+2,1}$ with 
	$$
	\mathfrak N^\IW = f.
	$$ 
\end{theorem}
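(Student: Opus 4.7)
The plan is to express $f$ in the form
$$
f(x) = \sum_{j=1}^n \Delta_j (\vecn_j \cdot x - o_j)_+ + a \cdot x + b
$$
for suitable scalars $\Delta_j \in \R$ and an affine remainder $a\cdot x + b$. The first sum is then realized by $n$ positively-oriented neurons, and the affine piece by two further neurons (via the construction in the Affine realization function remark, or by a pair of degenerate neurons with $w_j^2 = 0$ when $a = 0$), yielding a network $\IW \in \cW^{d_0,n+2,1}$ with $\mathfrak N^\IW = f$. The candidate $\Delta_j$ is the scalar gradient jump of $f$ across $S_j$: at a regular point $x_0 \in S_j$ (one not lying on any other breakline), continuity of $f$ forces the jump to be parallel to $\vecn_j$, so there exists $\Delta_j(x_0) \in \R$ with this jump equal to $\Delta_j(x_0)\vecn_j$, and piecewise affinity makes $\Delta_j$ locally constant on the regular part of $S_j$.

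\textbf{Main obstacle.} The regular part of $S_j$ is in general disconnected (cut by the codim-1 traces $S_j \cap S_k$, $k \ne j$), so the crux is to show $\Delta_j$ is globally constant; this is the only step that uses hypothesis (\ref{eq:assurep}). Near a point $x^* \in S_{j_1} \cap \ldots \cap S_{j_k}$ the assumption forces $\vecn_{j_1},\dots,\vecn_{j_k}$ to be linearly independent, so the $k$ hyperplanes partition a small neighborhood of $x^*$ into $2^k$ cells indexed by $\epsilon \in \{\pm 1\}^k$ on which $f$ has constant gradients $a_\epsilon$. For any two indices $a \ne b$ and any fixing of the remaining coordinates, the trivial identity
$$
(a_{++}-a_{+-})-(a_{-+}-a_{--}) = (a_{++}-a_{-+})-(a_{+-}-a_{--})
$$
has left-hand side a multiple of $\vecn_{j_b}$ and right-hand side a multiple of $\vecn_{j_a}$; linear independence forces both sides to vanish, which is exactly the statement that the scalar jump across $S_{j_a}$ is the same on both sides of $S_{j_b}$ near $x^*$. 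This lets $\Delta_j$ extend continuously across every pairwise intersection locus in $S_j$, producing a continuous locally-constant function on $S_j$ minus the locus where $\ge 3$ breaklines meet. Since that exceptional locus has codimension $\ge 2$ in $S_j$ (with connected complement when $\dim S_j \ge 2$, and being empty when $\dim S_j = 1$ by (\ref{eq:assurep})), $\Delta_j$ is constant on all of $S_j$.

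\textbf{Conclusion.} With $\Delta_1, \dots, \Delta_n$ in hand, set $h(x) = \sum_{j=1}^n \Delta_j (\vecn_j \cdot x - o_j)_+$ and $g = f - h$. By construction $g$ is continuous and piecewise affine with vanishing gradient jump across every regular point of every $S_j$; hence $g$ is affine on each connected component of $\R^{d_0} \setminus T$, where $T = \bigcup_{j \ne k}(S_j \cap S_k)$ is a finite union of affine subspaces of codimension $\ge 2$. For $d_0 \ge 2$ the complement of $T$ is path-connected, so continuity promotes $g$ to an affine function on all of $\R^{d_0}$; for $d_0 = 1$ one has $T = \emptyset$ by (\ref{eq:assurep}) and the conclusion is immediate. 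Assembling the $n$ neurons realizing $h$ with the two neurons realizing $g$ produces the desired network in $\cW^{d_0,n+2,1}$.
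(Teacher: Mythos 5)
Your proof is correct and follows essentially the same strategy as the paper: both extract the scalar gradient jump $\Delta_j$ across each breakline, prove it is globally constant using the linear independence of normals supplied by (\ref{eq:assurep}) (your $2\times 2$ identity at double points is the same computation as the tangent-space argument behind the paper's orientation-flip lemma), subtract $\sum_j \Delta_j(\vecn_j\cdot x-o_j)_+$, and realize the affine remainder with two extra neurons. The only substantive difference is organizational: the paper peels off one hyperplane at a time and globalizes constancy along explicit line segments inside $S_n$, whereas you handle all breaklines at once and invoke connectedness of $S_j$ minus the codimension-$\ge 2$ triple locus (and of $\R^{d_0}$ minus the pairwise intersections); both routes are sound.
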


\begin{rem}
Let $S_1,\dots,S_n$ be hyperplanes in $\R^{d_0}$. We note that the hyperplanes $S_1,\dots ,S_n$ do not satisfy assumption~(\ref{eq:assurep}) of the latter theorem iff there are indices $1\le j_1<\ldots<j_k\le n$ with $\emptyset\not=  S_{j_1}\cap\ldots \cap S_{j_{k-1}}\subset S_{j_k}$.

We note that in the case where $S_1,\dots,S_n$ denote the breaklines  of the hidden neurons of a network $\IW \in \cW^{d_0,n,1}$ we get that for Lebesgue almost-every choice of the network $\IW$ the breaklines satisfy  assumption~(\ref{eq:assurep}) of the latter theorem. (Indeed, one even has that for Lebesgue almost-every $\IW$, the function $\#\rho$ is bounded by two.)

%
	Recalling that the the mapping $\IW \mapsto \mathfrak N^{\IW}$ maps continuously into the space of continuous functions on a compact set endowed with uniform convergence we conclude that the universal approximation property (\cite{LESHNO1993861}, \cite{HORNIK19931069}) remains true when restricting to shallow networks for which  the breaklines of the hidden neurons satisfy assumption~(\ref{eq:assurep}).
\end{rem}

\begin{proof}
	Let $f:\R^{d_0} \to \R$ be a piecewise affine function with breaklines $S_1, \dots, S_n$ satisfying (\ref{eq:assurep}). Again we use the standard representation for the hyperplanes $S_1,\dots,S_n$ and denote by $\vecn_1,\dots,\vecn_n$ positively oriented normal vectors and by $o_1,\dots,o_n\in\R$ offsets such that for every  $k=1, \dots, n$ 
	$$
	S_k = \{x \in \R^{d_0}:  \vecn_k \cdot x  = o_k\}.
	$$
	Further, for $\sigma=(\sigma_i)_{1\le i\le n} \in \{-1,1\}^n$ we let
	$$
	\mathcal C(\sigma)= \{x \in \R^{d_0} : \sigma_k  (\vecn_k \cdot  x ) > \sigma_k o_k\text{ for all }k=1,\dots,n\}.
	$$
	Note that whenever $\mathcal C(\sigma)$ is not empty it is a cell. Moreover, every cell admits such a representation. 
	On a cell $\cC(\sigma)$ the gradient of $f$ is constant and we denote this constant by $\grad f_\sigma$.
%
	
	First, we show that for a cell $\cC(\sigma)$,  $x \in \overline{\cC(\sigma)}$ and $\bar \sigma\in\{\pm1\}^n$ with $\bar \sigma_k=\sigma_k$ for all $k\in \rho(x)^c$ one has that $\cC(\bar \sigma)$ is a cell with $x\in\overline{\cC(\bar \sigma)}$. 
	If $\rho(x)=\emptyset$ there is nothing to show and if $\rho(x)\neq \emptyset$ we get with assumption (\ref{eq:assurep}) that
	$$
	\dim\Bigl( \bigcap\limits_{j \in \rho(x)} S_j \Bigr) =d_0-\# \rho(x)
	$$
	so that the family $(\vecn_j:j\in\rho(x))$ is linearly independent. Therefore, the matrix $$M=(\vecn_j)_{j\in \rho(x)}\in \R^{n\times \rho(x)}$$
	has full rank and its adjoint $M^\dagger$ is surjective. Now let $\bar \sigma\in\{\pm1\}^n$ with $\bar \sigma_k=\sigma_k$ for all $k\in \rho(x)^c$ and pick $y\in \R^n$ with
	$$
	M^\dagger y= (\bar \sigma_j)_{j\in \rho(x)}.
	$$
	We show that for all sufficiently small $\eps>0$, $z_\eps=x+\eps y$ lies in $\cC(\bar \sigma)$. Indeed, for $k\in \rho(x)^c$ one has
	$$
	\bar \sigma_k ( \vecn_k \cdot x) >\bar \sigma_k o_k
	$$
	and by continuity, the inequality remains true for $z_\eps$ in place of $x$ as long as $\eps$ is sufficiently small. Moreover, for $k\in \rho(x)$ one has
	$$
	\bar \sigma_k( \vecn_k\cdot z_\eps ) = \bar \sigma_k \bigl(\underbrace{( \vecn_k \cdot x)}_{=o_k} +\eps \underbrace{(\vecn_k \cdot  y )}_{(M^\dagger y)_k=\bar \sigma_k}\bigr)>\bar \sigma_k o_k.
	$$
	Hence, for small $\eps>0$, one has $z_\eps\in \cC(\bar \sigma)$.

	We will also consider the reduced system, where we omit the last constraint and set  for $\tau\in\{\pm1\}^{n-1}$ 
	$$
	\cC'(\tau)=\{x\in \R^{d_0}: \tau_k (\vecn_k \cdot  x) >\tau_k o_k\text{ for all }k=1,\dots,n-1\}.
	$$
	We call $\cC'(\tau)$ a cell of the reduced system, if it is non-empty.
	
	Suppose that $\cC'(\tau)$ is a cell that intersects $S_n$. Since $\cC'(\tau)$ is an open set there exists an $x \in S_n$ which is also an element of $\overline {\cC(\tau, 1)}$ and $\overline {\cC(\tau,-1)}$. In particular, both are cells and we set
	$$
	Df_\tau=\nabla f_{(\tau,1)}-\nabla f_{(\tau,-1)}.
	$$
	
	We show that for a cell $\cC'(\tau)$ of the reduced system,  $x\in \overline {\cC'(\tau)}\cap S_k\cap S_n$ with $k\in\{1,\dots,n-1\}$  and $\bar \tau =(1-2 \1_{\{k\}})\tau$ (the orientations  one obtains when flipping the $k$th orientation) one has
	$$
	Df_{\bar \tau} =Df_\tau.
	$$
	First, we note that for every vector $w$ in the tangent space of $S_n$ we have
	$$
	\nabla f_{\tau,1}(w)=\nabla f_{\tau,-1}(w)\text{ \ and \ }\nabla f_{\bar \tau,1}(w)=\nabla f_{\bar \tau,-1}(w),
	$$
	since $f$ is differentiable in the $w$-direction on $S_n\cap \cC'(b)$ (resp.\ $S_n\cap \cC'(\bar b))$ and the differential agrees in that direction with the one on the positive and negative side of $S_n$. This entails that
	$$
	Df_\tau(w)=0=Df_{\bar \tau}(w).
	$$
	For the same reason we have for every $w'$ in the tangential direction of $S_k$ that
	$$
	\nabla f_{\bar \tau,1}(w')=\nabla f_{ \tau,1}(w')\text{ \ and \ }\nabla f_{\bar \tau,-1}(w')=\nabla f_{\tau,-1}(w')
	$$ 
and again we get that $$Df_{\bar \tau}(w')=
\nabla f_ {\bar \tau,1}(w')-\nabla f_{\bar \tau,-1}(w')=\nabla f_ {\tau,1}(w')-\nabla f_{\tau,-1}(w')=Df_{\tau}(w').	$$
By assumption, $S_k$ and $S_n$ are not parallel hyperplanes so that we obtain with linearity that $Df_{\bar \tau}=Df_{\tau}$.

 In the next step we deduce that for every cell $\cC'(\tau)$ that intersects $S_n$ one has that $Df_\tau$ does not depend on the choice of $\tau$. To see this we let $\cC'(\tau_1)$ and $\cC'(\tau_2)$ be two cells of the reduced system that intersect $S_n$, pick $x_1\in \cC'(\tau_1)\cap S_n$ and $x_2\in\cC'(\tau_2)\cap S_n$ and consider the path $\gamma:[0,1] \to S_n$  given by 
	$$
		\gamma(t)= t x_2+(1-t)x_1.
	$$
	By definition the path does not lie in one of the hyperplanes $S_1,\dots,S_{n-1}$ and there are finitely many times $0<t_1<\ldots<t_\ell<1$  at which the path switches cells of the reduced system. For $k=0,\dots,\ell$, we denote by $\cC'(\tilde \tau_k)$ the cell that is visited by the path on the time interval $(t_k,t_{k+1})$ where $t_0=0$ and $t_{\ell+1}=1$. We note that  in $\tilde \tau_{k-1}$ and $\tilde \tau_k$  all components in $\rho(\gamma(t_k))^c$ agree  for $k=1,\dots,\ell$ and it suffices to   show that for each $k=1,\dots,\ell$, $Df_{\tilde \tau_k} =Df_{\tilde \tau_{k-1}}$.
This is obtained by applying the property that $\tilde \tau_k$ can be obtained from $\tilde \tau_{k-1}$ by finitely many flips of single orientations as analyzed in the previous step. This yields that indeed $Df_{\tau_2}=Df_{\tau_1}$.	

	 It remains to represent $f$ as a neural network. As we observed for every cell $\cC'(\tau)$ of the reduced system that intersects $S_n$, the value $\Delta_n:=Df_\tau$ is the same. Note that  
	$$
	f^{n-1}(x)= f(x)-  \Delta_n ( \vecn_ n \cdot x-o_n)_+
	$$ 
	has breaklines $S_1,\dots, S_n$ and for every cell $\cC'(\tau)$ of the reduced system that intersects $S_n$ one has
	$$
	Df^{n-1}_\tau = 0.
	$$
	Consequently, $f^{n-1}$ is affine on the cells of the reduced equation and, in particular, it has
	 breaklines $S_1,\dots,S_{n-1}$. Iteration of the argument yields $\Delta_1,\dots,\Delta_n\in\R$ 
	 such that
	 $$
	 f(x)-\sum_{k=1}^n \Delta_k ( \vecn_ k \cdot x-o_k)_+
	 $$
	has no breaklines meaning that it is affine. Since every affine function can be represented with two neurons there exists a normal vector $\vecn_{n+1}=\vecn_{n+2}$, kinks $\Delta_{n+1}=-\Delta_{n+2}$ and $b^2\in\R$ such that for $o_{n+1}=o_{n+2}=0$, one has
	$$
	f= \cN^{\vecn, o,\Delta,(1,\dots,1,-1),b^2}.
	$$
	Thus we constructed an effective network with $n+2$ neurons that represents $f$.
\end{proof}

{\bf Acknowledgement.}
Funded by the Deutsche Forschungsgemeinschaft (DFG, German Research Foundation) under Germany's Excellence Strategy EXC 2044--390685587, Mathematics Münster: Dynamics--Geometry--Structure.

\bibliographystyle{alpha}
\bibliography{Parameter_space}

\end{document}